%
\documentclass[runningheads]{llncs}
\usepackage[T1]{fontenc}
%
\usepackage{graphicx}
\usepackage{commands} 
\usepackage{amssymb} 
\usepackage{amsmath} 
\usepackage{subfig}
\usepackage{subfloat}
%
%

\begin{document}

\title{Responsibility in a Multi-Value Strategic Setting}
%
%
\author{Timothy Parker$^*$\orcidID{0000-0002-5594-9569} \and
Umberto Grandi\orcidID{0000-0002-1908-5142} \and
Emiliano Lorini\orcidID{0000-0002-7014-6756}}
\authorrunning{T. Parker et al.}
%
\institute{IRIT, CNRS, University of Toulouse, Toulouse, France
\email{\{timothy.parker,umberto.grandi,emiliano.lorini\}@irit.fr}\\
$^*$: Corresponding Author}
\maketitle              
\begin{abstract}
Responsibility is a key notion in multi-agent systems and in creating safe, reliable and ethical AI.  In particular, the evaluation of choices based on responsibility is useful for making robustly good decisions in unpredictable domains. However, most previous work on responsibility has only considered responsibility for single outcomes, limiting its application. In this paper we present a model for responsibility attribution in a multi-agent, multi-value setting. We also expand our model to cover responsibility anticipation, demonstrating how considerations of responsibility can help an agent to select strategies that are in line with its values. In particular we show that non-dominated regret-minimising strategies reliably minimise an agent's expected degree of responsibility.
\keywords{Responsibility  \and Multi-Agent Systems \and Linear Temporal Logic \and Strategy Comparison}
\end{abstract}

\section{Introduction}

Responsibility attribution \cite{AlechinaHL17,Baier0M21,DBLP:journals/jair/ChocklerH04,Halpern15} is the process of determining which agent or set of agents can be held responsible for a particular outcome. This is a backward-looking process, meaning that while it is useful for allocating praise or blame, it cannot be used for strategy selection, since responsibility for an outcome can only be determined once the outcome has occurred. Responsibility anticipation \cite{ECAIpaper} is the process of predicting which outcomes an agent \emph{may} be responsible for if it performs a particular strategy. This means it can be used in strategy selection, i,e to ensure that an agent cannot be responsible for some negative outcome. In particular, considerations of responsibility are particularly useful in settings where the satisfaction of some crucial value cannot always be guaranteed, since by minimising its anticipated responsibility for the violation of a value, an agent can ensure that it maximises its causal contribution towards the satisfaction of the value (since if the agent could have done more to try and satisfy the value, then it would anticipate responsibility for the value's violation).

However, current methods for responsibility anticipation only consider responsibility for single outcomes or values. In a real-world setting, it is likely that an agent will have to consider multiple, possibly conflicting values. Therefore we wish to investigate how notions of responsibility can be expanded to work in a multi-value setting, to widen the scope for the application of responsibility-based evaluation in various settings.

In this paper we present and discuss various properties that are appealing for multi-value notions of responsibility. Of particular interest is the idea that an agent may be able to avoid responsibility in some cases by providing an ``excuse'' justifying their choice of a specific strategy. We will introduce two separate definitions of responsibility, one simple notion of passive responsibility and a more complex notion (inexcusable passive responsibility) that considers the excuses that the agent might have. The main result of our paper is that these notions of responsibility are all equivalent in anticipation to some combination of non-dominated and regret-minimising strategies (for which we introduce a symbolic notion of regret in this paper). In particular, strategies that minimise both forms responsibility are exactly the strategies that are both non-dominated and regret-minimising. 

The rest of the paper is organised as follows, section \ref{sec:relatedwork} locates our paper in its field and compares our work with some related approaches from the literature. Section \ref{sec:model} introduces our formal model and section \ref{sec:responsibilityatt} defines our notions of responsibility. Section \ref{sec:regret} formalises our notion of symbolic regret, and considers the perspective of anticipation. Finally, section \ref{sec:futurework} concludes the paper and outlines directions for future work.

\begin{example}[Cleaning the Shopping Centre]
    A robot (Anna) is tasked with various jobs in cleaning a shopping centre. There is a customer (Ben) who can help or hinder Anna's tasks depending on how he acts, such as by blocking access to a bin which Anna is trying to empty (for simplicity, we consider only a single customer). The jobs that Anna must complete are as follows:
    \begin{itemize}
        \item $\omega_1 =$ The plants are watered.
        \item $\omega_2 =$ The bins are emptied.
        \item $\omega_3 =$ The windows have been cleaned.
        \item $\omega_4 =$ Litter has been collected.
        \item $\omega_5 =$ The floor has been swept.
    \end{itemize}
\end{example}

\section{Related Work}\label{sec:relatedwork}

This paper covers responsibility attribution and anticipation, and how this can be expanded to handle responsibility in a multi-value system. 

Responsibility attribution is a very well studied topic with various approaches taken by a wide variety of authors, including approaches based on game-theoretic tools \cite{Baier0M21,Braham2012,DBLP:journals/fuin/LoriniM18} and logical tools including STIT logic \cite{DBLP:journals/logcom/LoriniLM14,DBLP:conf/atal/AbarcaB22,Baltag2021-BALCAA-8,DBLP:journals/ai/LoriniS11}, LTLf \cite{ECAIpaper},  ATL \cite{DBLP:conf/atal/YazdanpanahDJAL19,DBLP:conf/clima/BullingD13}, logics of strategic and extensive games \cite{DBLP:conf/aaai/Shi24,DBLP:conf/ijcai/Naumov021,DBLP:journals/apal/NaumovT23} and
structural equation models  \cite{DBLP:journals/jair/ChocklerH04}. Responsibility anticipation is less well studied, though there is some previous work by Grandi et al. \cite{IJCAIpaper,ECAIpaper}. Overall, our model is most similar to the one presented by Grandi et al. as our objective was to expand their ideas to account for responsibility in a multi-value setting.

A similar notion to responsibility anticipation is responsibility for risk \cite{vandePoel2012}, as both concern an agent's responsibility for an outcome that may not actually happen. The main differences are that risk is typically formalised in an explicitly probabilistic way, such as in the work of Gladyshev et al. \cite{KR2023-32} who introduce a logical model of responsibility for risk that primarily considers ex post attribution of responsibility for risk (whereas our notion is ex ante) and introduce a complete and decidable logic for reasoning about group responsibility for taking risk.
 
 To the best of our knowledge, there is very little work that investigates responsibility in such a setting. One work that we are aware of is an approach by Lorini and Sartor \cite{DBLP:conf/jurix/LoriniS21} whose work focuses on secondary responsibility, which is the responsibility of an agent who influences another agent to perform some action. Their notions do not share the properties of consistency or completeness that we discuss, since they consider responsibility \textit{relative to} a set of values (meaning the agent is not implied to be responsible for everything in the set) rather than our work that focuses on responsibility \textit{for} a set of values. They do also consider the principle that an agent is more responsible if they have no excuse for their actions, though their notion of excuse is based on the values that an agent controls rather than domination between strategies.

We also introduce a symbolic notion of regret, for use in strategy comparison. This was first introduced (independently) in decision theory by Savage \cite{Savage51} and Niehans \cite{Niehans1948}. It was later introduced to game theory by Linhart and Radner \cite{Linhart89}. Our concept of regret is also similar to the notion of guilt introduced by Lorini and M{\"u}hlenbernd \cite{DBLP:journals/fuin/LoriniM18}. Their model uses separate numerical values to track both the individual utility of an agent for some particular history, as well as the degree of ideality of that history (such as the utility of the worst-off agent). The guilt of an agent in a history is the difference between the ideality achieved and the best possible ideality (fixing the actions of all other agents). This is similar to our notion of regret based on passive responsibility, but is purely numeric instead of symbolic.

\section{Model}\label{sec:model}

In this section we introduce the framework for our model. We require a finite set of agents $\agentset = \{1,\ldots,n\}$ and a countable set of propositions $\propset = \{p, q, \ldots\}$ which produces a set of states $\stateset = 2^{\propset}$. Let $\actset = \{a, b , \ldots\}$ be a finite non-empty set of action names. To describe the actions taken by all agents at a single time we introduce the notion of a joint action, which is a function $J:\agentset \longrightarrow \actset$. The set of all joint actions is $\jointactset$.

To trace the actions of agents and changing states over time we define a $k$-history to be a pair $\history= (\historyst, \historyact)$ with $\historyst: \{0,\ldots,k\} \longrightarrow \stateset$ and $\historyact: \{0,\ldots,k-1\} \longrightarrow \jointactset$. The set of $k$-histories is noted $\historyset{k}$. The set of all histories is $\historyset{}=\bigcup_{k \in \nat }\historyset{k}$. For convenience, given a $k$-history $\history$ and some $k' \leq k$ we write $\history^{k'}$ for the history corresponding to the first $k'+1$ states of $\history$. 

To describe the world in which our agents operate we introduce the notion of a multiagent transition system (MTS).
\begin{definition}[Multiagent Transition System]
    A multiagent transition system $\game$ is a pair $(\stateset,\actionfunction)$ where $\actionfunction: \stateset \times \jointactset \to \stateset$ is a function that maps each pair of a state $s$ and a joint action $J$ to a successor state of $s'$.
\end{definition}


In our model, agents act according to strategies that determine how they should act based on the actions of all agents and how the state of the world has progressed. More formally, given an MTS $\game = (\stateset,\actionfunction)$ a strategy is a function $\strategy: \historyset{} \to \actset$. The set of all strategies is denoted $\stratset$. A joint strategy for a coalition $J \subseteq \agentset$ where $J = \{j_1,\ldots,j_m\}$ is a tuple $\jstrategy = (\sigma_{j_1},\ldots,\sigma_{j_m})$. The set of all joint strategies for the coalition $J$ is written $\jstratset^J$.Given joint strategies $\jstrategy$ and $\jstrategy'$ for coalitions $J_1$ and $J_2$ where $J_1 \cap J_2 = \emptyset$, we write $(\strategy_i,\strategy_i')$ for the union of $\strategy_i$ and $\strategy_i'$. For convenience, we write $\jstrategy_{-i}$ for the reduction of $\jstrategy$ from the coalition $J$ to the coalition $J \setminus \{i\}$ (when $i \in J$).

In our model, histories are temporal entities that are always finite in length, therefore the most natural choice to describe properties of histories is Linear Temporal Logic over Finite Traces \cite{GiacomoV13,degiacomo2015}. This allows us to describe temporal properties such as ``$\phi$ never occurs'' or ``$\phi$ always occurs immediately after $\psi$''. We write the language as $\langlogic_{\ltllogic}$, defined by the following grammar:
\begin{center}\begin{tabular}{lcl}
  $\phi $ &  $\bnf$ & $ p  \mid  \neg\phi \mid \phi  \wedge \phi   \mid \nexttime \phi \mid
\until { \phi   } { \phi    },  $\\
\end{tabular}\end{center}
with 
$p$ ranging  over $\propset$. Atomic formulas in this language are those that consist of a single proposition $p$.
$\nexttime$
and $\until {     } {     }$
are the  operators
``next''
and ``until'' of $\ltllogic$. 
Operators
``henceforth'' ($\henceforth$)
and ``eventually'' ($\eventually$)
are defined in the usual way:
$\henceforth \phi \defin \neg ( \until {   \top   } {  \phi     }) $
and 
$\eventually \phi  \defin \neg  \henceforth \neg  \phi $. We define the semantics for $\nexttime$ and $\until{}{}$ as follows, the rest is the same as $\langlogic_{\proplogic+}$ (for $t \in \{0,\ldots,k\}$).
\begin{alignat*}{2}
  \history,  t &\models \nexttime \phi & ~\IFF~ & t < k \AND \history,  t+1 \models   \phi, \\
   \history,  t &\models \until { \phi_1    } { \phi_2    }   & ~\IFF~ &
    \begin{aligned}[t]
      &\exists t' \geq t :   t' \leq k \AND \history,  t' \models \phi_2 \AND\\
      &\forall t''   \geq t  :   \IF 
      t'' < t' ~\THEN~ \history,  t'' \models \phi_1.
    \end{aligned} 
\end{alignat*}

\subsection{Values, Goals and Moral Action Systems}

We assume that our agents will have multiple goals and/or values that they wish to satisfy, which may have different priority levels. Following the approach of Grandi et al. \cite{GrandiLPA22}. We represent this by a prioritised value base $\valueprof$ which is simply a sequence of sets of $\ltllogic$-formulas $\valueprof = (\valueset_1,...,\valueset_m)$. We do not consider the process by which these values are arrived at, nor how they are translated into $\ltllogic$, as that is outside the scope of this paper. The content of the values that artificial agents should follow is a relatively well-studied area \cite{Talbot17,BelloniG15,Rossimoralpref}.

For each $\valueset_n \in \valueprof$ we write $\valuesetminus_n$ for the set $\{\neg \omega \suchthat \omega \in \valueset_n\}$. Finally we write $\valueprof^{\neg}$ for the tuple $(\valuesetminus_1, \ldots, \valuesetminus_m)$ and $\valueprofplus$ for the set $\bigcup \valueprof \cup \bigcup \valueprof^{\neg}$. We assume that our value base is ``consistent'' meaning that for all $\omega_1, \omega_2 \in \bigcup \valueprof$, $\omega_2 \not \equiv \neg \omega_1$. As we are concerned with an agent's responsibility for both the satisfaction and violation of values we will be mostly considering comparing subsets of $\valueprofplus$.

To represent the structure of $\valueprof$ we use quantitative lexicographic comparison over the sets in $\valueprof$, treating the first set $\valueset_1$ as most important and the last set $\valueset_m$ as the least. This comparison works as follows, given $X, Y \subseteq \valueprofplus$, we say that $X \preceq Y$ if and only if one of the two following properties holds:
\begin{align*}
  i) & \ \exists n  \text{ s.t } 1 \leq n \leq m
   \text{ and }  (|X \cap \valueset_n| - |X \cap \valuesetminus_n|) < \\&(|Y \cap \valueset_n| - |Y \cap \valuesetminus_n|)\\ &\text{and}\\
          & \ \forall n' \text{ if }  1 \leq n' < n
          \text{ then }
           (|X \cap \valueset_n| - |X \cap \valuesetminus_n|) = \\&(|Y \cap \valueset_n| - |Y \cap \valuesetminus_n|)\\
      ii)   &  \ \forall n
      \text{ if }
      1 \leq n \leq m      \text{ then } (|X \cap \valueset_n| - |X \cap \valuesetminus_n|) = \\&(|Y \cap \valueset_n| - |Y \cap \valuesetminus_n|)
\end{align*}

We also define $X \valuelt Y$ as $X \valueleq Y$ and $Y \not \valueleq X$. To allow for comparing histories we write $\satset(\history,\valueprofplus)$ for the maximal subset of $\valueprofplus$ that is satisfied in $\history$.

There are many alternative ways to rank sets of values for for strategy (or plan) comparison \cite{Bienvenu06,Dennis2} which could also be used in our model.

\begin{definition}[Moral Action System]
A Moral Action System (MAS) is a tuple $\pldomain=(\game,s_0,k,\valueprof)$ where $\game$ is an MTS, $s_0 \in \stateset$ is a start state, $k$ is an integer (known as the ``horizon'') and $\valueprof$ is a value base.
\end{definition}

For simplicity, in any MAS $\pldomain$ it is assumed that $\game$, $s_0$ and $k$ are fully observable. Also for simplicity, the value base $\valueprof$ is considered to be shared by all agents, though the agents have no information about the values of other agents and cannot communicate. For this reason we consider only the perspective of a single agent in this paper. Furthermore, it is entirely consistent to suppose that $\valueprof$ contains only a single value set that contains only a single value, meaning that all of the results in this paper can, in principle, be applied to single-value settings. However, in practice it is likely to be more straightforward to use single-value notions of responsibility \cite{Braham2012,DBLP:journals/logcom/LoriniLM14,ECAIpaper} in single-value settings.

If $\jstrategy$ is a strategy for $\agentset$ we write $\play(\jstrategy,\pldomain)$ for the history resulting from the application of $\jstrategy$ to the moral action system $\pldomain = (\game,s_0,k,\valueprof)$. This is the history that starts at $s_0$ and proceeds according to $\jstrategy$ and $\actionfunction$ for $k$ steps.

More formally, given a strategy $\jstrategy$ for $\agentset$ and an MAS $\pldomain = (\game,s_0,k,\valueprof)$, $\history = \play(\jstrategy,\pldomain)$ is the $k$-history such that:
\begin{align*}
    \historyst(0) = &s_0\\
    \forall 1 \leq k' \leq k, \historyact(k)(i) = &\jstrategy_i(\history^{k'-1})\\
    \forall 1 \leq k' \leq k, \historyst(k) = &\actionfunction(\historyst(k'-1),\historyact(k'))
\end{align*}

\begin{example}[Continued]
    Anna's value base treats all tasks as equally important, meaning that $\valueprof = (\valueset_1)$ where $\valueset_1 = \{\omega_1, \omega_2, \omega_3, \omega_4, \omega_5\}$
\end{example}

The main aim of this paper is to demonstrate that considering notions of responsibility is helpful in the selection of strategies in a multi-value setting. One notion that it is therefore helpful to define is the notion of a non-dominated strategy, which is a standard concept in game theory and decision theory. Firstly, we define weak dominance:

\begin{definition}[Weak Dominance]
   Given an MAS $\pldomain = (\game,s_0,k,\valueprof)$, an agent $i$ and two strategies $\strategy_i$ and $\strategy_i'$, we say that $\strategy_i'$ weakly dominates $\strategy_i$ (written $\strategy_i \leq_\pldomain \strategy_i'$) if and only if for all $\jstrategy_{-i} \in \jstratset^{-i}$, $\satset(\play((\strategy_i,\jstrategy_{-i}),\pldomain),\valueprofplus) \preceq \satset(\play((\strategy_i',\jstrategy_{-i}),\pldomain),\valueprofplus)$. 
\end{definition}

In words, $\strategy_i'$ weakly dominates $\strategy_i$ if and only if in every possible outcome from executing $\strategy_i'$, (i.e for every possible strategy for $\agentset \setminus \{i\}$, the outcome obtained is at least as good as the outcome that would have been obtained by executing $\strategy_i$ instead.

\begin{definition}[Non-Dominated]
     Given an MAS $\pldomain = (\game,s_0,k,\valueprof)$, we say that $\strategy_i$ is non-dominated if and only if there is no strategy $\strategy_i' \in \stratset$ such that $\strategy_i \leq_\pldomain \strategy_i'$ and $\strategy_i' \not \leq_\pldomain \strategy_i$.
\end{definition}

In words, a strategy $\strategy_i$ is non-dominated if and only if there is no strategy $\strategy_i'$ such that $\strategy_i'$ weakly dominates $\strategy_i$ and sometimes does strictly better. In particular, since there are finitely many possible histories in any MAS $\pldomain = (\game,s_0,k,\valueprof)$ (since $k$ must be finite), there are finitely many distinct strategies, so there must always exist at least one non-dominated strategy.

\section{Responsibility Attribution}\label{sec:responsibilityatt}

In this paper we will primarily be focusing on the notion of passive responsibility, as formalised by Lorini et al. \cite{DBLP:journals/logcom/LoriniLM14} and Grandi et al. \cite{IJCAIpaper,ECAIpaper}. This corresponds to the notion of ``allowing $\omega$ to happen'' and is defined as follows: Given an MAS $\pldomain = (\game,s_0,k,\valueprof)$, a joint strategy $\jstrategy$ for $\agentset$ and an agent $i$ we say that $i$ is attributed single-value passive responsibility for $\omega$ in $\play(\jstrategy,\pldomain)$ under $\jstrategy$ if and only if $\play(\jstrategy,\pldomain) \models \omega$ and there is some strategy $\strategy'_i$ for $i$ such that $\play((\strategy'_i,\jstrategy_{-i}),\pldomain) \models \neg \omega$. In words, an agent is attributed passive responsibility for $\omega$ if $\omega$ occurs and, fixing the strategies of all other agents, $i$ could have acted differently and prevented $\omega$.

Another way to think about responsibility is from a courtroom-esque perspective where some external agent (the accuser) is trying to blame $i$ for the occurrence of $\omega$. For this we define the notion of \textit{accusation}, where given a joint strategy $\jstrategy$ for $\agentset$, an agent $i$, an $\LTLf$ formula $\omega$ and an MAS $\pldomain = (\game, s_0, k, \valueprof)$ such that $\play(\jstrategy,\pldomain) \models \omega$, a single-value accusation for $(i,\jstrategy,\omega,\pldomain)$ is a strategy $\strategy_i'$ such that $\play((\strategy_i',\jstrategy_{-i}),\pldomain) \models \neg \omega$. We can then see that given a joint strategy $\jstrategy$ for $\agentset$, $i$ is attributed single-value passive responsibility for $\omega$ in $\play(\jstrategy,\pldomain)$ if and only if there exists an accusation for $(i,\jstrategy,\omega,\pldomain)$.

In words, we can imagine that the accuser says to the agent ``you are responsible for $\omega$ because you could have prevented it, if you had acted according to $\strategy_i'$ then $\omega$ would not have occured''. However, we might think that for this accusation to be truly effective, there can be no reason to pick $\strategy_i$ over $\strategy_i'$, which we take to mean that $\strategy_i'$ must weakly dominate $\strategy_i$.

\begin{definition}[Liability]\label{def:liability}
    Given a joint strategy $\jstrategy$ for $\agentset$, an agent $i$, an $\ltllogic$ formula $\omega$ and an MAS $\pldomain = (\game,s_0,k,\valueprof)$ we say that $i$ is liable for $\omega$ in $\play(\jstrategy,\pldomain)$ if and only if $\play(\jstrategy,\pldomain) \models \omega$ and there exists some strategy $\strategy_i'$ such that $\play((\strategy_i',\jstrategy_{-i}),\pldomain) \models \neg \omega$ and $\strategy_i \leq_{\pldomain} \strategy_i'$. 
\end{definition}

One possible issue with the notion of single-value responsibility given above is that attributing responsibility requires full knowledge of the strategies of all agents, which is a very strong demand, particularly if some of those agents are humans. We do not address this issue in our paper since we focus more on anticipating than attributing responsibility. When anticipating possible outcomes an agent will quantify over all possible strategies of the other agents, and assuming knowledge of all agents' strategies in a simulated execution is not problematic. Nonetheless, considering how we can attribute responsibility with limited information about the strategies of agents would be an interesting direction for future work.

\subsection{Multi-Value Responsibility}

We shall now consider how to evaluate the responsibility of an agent in a setting with multiple values, with a particular focus on how we can use these notions of responsibility to help agents select good strategies.

Perhaps the simplest approach to multi-value responsibility is to say that $i$ is responsible for some set of values $\Omega$ if and only if $i$ is responsible for each $\omega \in \Omega$. This seems reasonable, as it means that if $i$ is responsible for $\Omega$ then $i$ could have made every formula in $\Omega$ true if it had acted differently. However, this approach does not consider if $i$ could have made the formulas in $\Omega$ true \textit{simultaneously}, only \textit{individually}. This can cause unintuitive results.

\begin{table}[t]
\centering
\subfloat[Table 1]{
\begin{tabular}{|c|c|}
    \hline
    & $\strategy_{B}$\\
    \hline
    $\strategy_A$ & $\emptyset$\\
    \hline
    $\strategy_A'$ & $\omega_1$\\
    \hline
    $\strategy_A''$ & $\omega_2$\\
    \hline
    $\strategy_A'''$ & $\omega_1, \omega_2$\\
    \hline
\end{tabular}
}
\quad
\subfloat[Table 2]{
\begin{tabular}{|c|c|c|}
    \hline
    & $\strategy_{B}$ & $\strategy_{B}'$\\
    \hline
    $\strategy_A$ & $\omega_1$, $\omega_2$ & $\omega_3$\\
    \hline
    $\strategy_A'$ & $\emptyset$ & $\omega_1$, $\omega_2$ \\
    \hline
\end{tabular}}
\end{table}

Consider Table 1, which presents a very simple scenario where Anna has to pick between various strategies but Ben has only one strategy. We can consider three scenarios based on this table. In scenario A, Anna only has access to $\strategy_A$ and $\strategy_A'$. In scenario B Anna also has access to $\strategy_A''$ and in scenario C Anna has access to all four strategies. It is intuitive to assert that in scenario A, if Anna picks $\strategy_A$ then she should be responsible only for $\{\neg \omega_1\}$ whereas in scenario C she should be responsible for $\{\neg \omega_1, \neg \omega_2\}$ (if she chose $\strategy_A$). This is because in scenario A the best Anna could have done was to water the plants $\{\omega_1\}$ whereas in scenario B she could have watered the plants and emptied the bins $\{\omega_1,\omega_2\}$. However, it is less clear what should happen in scenario B. Separately, it is clear that we can hold Anna responsible for not watering the plants \textit{or} for not emptying the bins, but if we hold her responsible for both, then we find that she is equally responsible in scenario B as scenario C. This is odd because it seems that the addition of $\strategy_A'''$ increases Anna's degree of responsibility much more than the addition of $\strategy_A''$. Accordingly, we introduce the following requirement on multi-value notions of responsibility which takes the form of a necessary (but not necessarily sufficient) requirement for responsibility:

\begin{property}[Consistency] Given an MAS $\pldomain = (\game,s_0,k,\valueprof)$, an agent $i$ and a strategy $\jstrategy$ for $\agentset$, if agent $i$ is responsible for $X$ in the history $\play(\jstrategy,\pldomain)$ then $\play(\jstrategy,\pldomain) \models \bigwedge X$ and there exists some $\strategy_i'$ such that $\play((\strategy_i',\jstrategy_{-i}),\pldomain) \models \bigwedge\{\neg \omega \suchthat \omega \in X\}$.
\end{property}

This means that in general, we cannot assume that if $i$ is responsible for the set $A$ and the set $B$ in the history $\play(\jstrategy,\pldomain)$ then $i$ is also responsible for the set $A \cup B$. Another consideration is the issue that a responsibility set $A$ may be ``misleading'' in the sense that it misses out important information. In Table 2 $i$ can seemingly be held responsible for $X = \{\omega_1,\omega_2\}$ in both the history $\history_1 = \play((\strategy_i,\jstrategy_{-i}'),\pldomain)$ and the history $\history_2 = \play((\strategy_i',\jstrategy_{-i}),\pldomain)$. However, it seems that $i$ should be ``more'' responsible in $\history_2$ than $\history_1$ since in $\history_1$ we at least have the consolation of satisfying $\omega_3$ which could not have been done otherwise, whereas in $\history_1$ there is no compensation. This suggests that responsibility sets should be not just ``consistent'' but also ``complete''.

\begin{property}[Completeness] Given an MAS $\pldomain = (\game,s_0,k,\valueprof)$, an agent $i$ and a strategy $\jstrategy$ for $\agentset$, if agent $i$ is responsible for $X \subseteq \langlogic_{\ltllogic}$ in the history $\history_1 = \play(\jstrategy,\pldomain)$ then there exists some strategy $\strategy_i'$ for $i$ such that for $\history_2 = \play((\strategy_i',\jstrategy_{-i}),\pldomain)$ and all $\omega \in \valueprof$ such that $\omega, \neg \omega \notin X$, either $\history_1, \history_2 \models \omega$ or $\history_1, \history_2 \models \neg \omega$.
\end{property}

To guarantee the notions of consistency and completeness we define the notion of responsibility via a strategy, which will be a useful building block in our future definitions.

\begin{definition}[Responsibility via Strategy]
    Given an MAS $\pldomain = (\game,s_0,k,\valueprof)$, an agent $i$ and a strategy $\jstrategy$ for $\agentset$, we say that $i$ is responsible for $X \subseteq \langlogic_{\ltllogic}$ via $\strategy_i'$ if and only if $X = \satset(\play(\jstrategy,\pldomain),\valueprofplus) \setminus \satset(\play((\strategy_i',\jstrategy_{-i}),\pldomain,\valueprofplus)$.
\end{definition}

We will now introduce a natural extension of passive responsibility that guarantees consistency and completeness:

\begin{definition}[Passive Responsibility]

Given an MAS $\pldomain = (\game,s_0,k,\valueprof)$, an agent $i$ and a strategy $\jstrategy$ for $\agentset$, $i$ is passively responsible for $X \subseteq \valueprofplus$ in $\play(\jstrategy,\pldomain)$ if and only if if there exists some strategy $\strategy_i'$ for $i$ such that $i$ is responsible for $X$ via $\strategy_i'$.
\end{definition}

\begin{theorem}\label{thm:passive_resp_att}
    Passive responsibility satisfies consistency and completeness.
\end{theorem}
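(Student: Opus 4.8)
The plan is to unfold the definition of passive responsibility and reduce both properties to one elementary fact about the operator $\satset$. If $i$ is passively responsible for $X$ then by definition there is a witness strategy $\strategy_i'$ with $X = \satset(\play(\jstrategy,\pldomain),\valueprofplus) \setminus \satset(\play((\strategy_i',\jstrategy_{-i}),\pldomain),\valueprofplus)$. Writing $\history_1 = \play(\jstrategy,\pldomain)$ and $\history_2 = \play((\strategy_i',\jstrategy_{-i}),\pldomain)$, the whole argument rests on the observation that for any $\omega \in \valueprofplus$ and any history $\history$ we have $\omega \in \satset(\history,\valueprofplus)$ if and only if $\history \models \omega$, since $\satset(\history,\valueprofplus)$ is by definition the maximal subset of $\valueprofplus$ satisfied in $\history$. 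I would state this as a preliminary remark, together with the fact that satisfaction is evaluated at the initial point of the finite trace, so that $\history \not\models \omega$ and $\history \models \neg\omega$ are interchangeable.

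For consistency I would argue directly. Since $X \subseteq \satset(\history_1,\valueprofplus)$, every $\omega \in X$ satisfies $\history_1 \models \omega$ by the preliminary remark, hence $\history_1 \models \bigwedge X$. For the second conjunct I reuse the same witness $\strategy_i'$: each $\omega \in X$ lies outside $\satset(\history_2,\valueprofplus)$ by the definition of set difference, so $\history_2 \not\models \omega$, and by bivalence at the initial point this gives $\history_2 \models \neg\omega$; collecting these yields $\history_2 \models \bigwedge\{\neg\omega \suchthat \omega \in X\}$, which is exactly what consistency demands.

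For completeness I would again take the same $\strategy_i'$ and fix an arbitrary value $\omega \in \bigcup\valueprof$ with $\omega,\neg\omega \notin X$, the goal being that $\history_1$ and $\history_2$ agree on $\omega$. The extra ingredient here is that for each value $\omega$ exactly one of $\omega,\neg\omega$ belongs to $\satset(\history,\valueprofplus)$, since both lie in $\valueprofplus$ and exactly one is satisfied at the initial point. I would then argue by contradiction: if $\history_1$ and $\history_2$ disagreed on $\omega$, then one of the literals $\omega,\neg\omega$ would be satisfied by $\history_1$ but not by $\history_2$, placing it in $\satset(\history_1,\valueprofplus) \setminus \satset(\history_2,\valueprofplus) = X$ and contradicting $\omega,\neg\omega \notin X$. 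Hence the two histories agree, giving completeness.

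The step I expect to be the most delicate is completeness, as it is the only place that combines the set-difference structure of $X$ with the bivalence observation in both directions (over $\omega$ and over $\neg\omega$); everything else is routine unfolding of definitions. The one bookkeeping point I would state carefully is precisely the passage between membership in $\satset$ and satisfaction of the negated formula, since that is what licenses treating the absence of a literal from $\satset(\history_2,\valueprofplus)$ as the truth of its negation in $\history_2$.
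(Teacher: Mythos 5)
Your proposal is correct and follows essentially the same route as the paper's own proof: unfold the witness strategy $\strategy_i'$ from the definition, derive consistency from $X \subseteq \satset(\history_1,\valueprofplus)$ together with bivalence of $\satset$ on $\history_2$, and derive completeness from the set-difference structure of $X$ (the paper argues this last step by direct case analysis on whether $\omega$ or $\neg\omega$ lies in $\satset(\history_1,\valueprofplus)$, whereas you phrase it as a contradiction, but these are the same argument). Your explicit preliminary remark linking membership in $\satset(\history,\valueprofplus)$ to satisfaction, and its bivalence, is merely a tidier statement of what the paper uses implicitly.
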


\begin{proof}
    Let $\pldomain = (\game,s_0,k,\valueprof)$ be an MAS, $i$ an agent and $\jstrategy$ a strategy for $\agentset$. Suppose that $i$ is passively responsible for $X \subseteq \valueprofplus$ in $\history_1 = \play(\jstrategy,\valueprof)$. Therefore there exists some strategy $\strategy_i'$ such that $X = \satset(\history_1,\valueprofplus) \setminus \satset(\history_2,\valueprofplus)$ where $\history_2 = \play((\strategy_i',\jstrategy_{-i}),\pldomain)$.

    First we will show consistency. Since $X \subseteq \satset(\history_1,\valueprofplus)$ we know that $\history_1 \models \bigwedge X$. Furthermore, $X \cap \satset(\history_2,\valueprofplus) = \emptyset$ and $X \subseteq \valueprofplus$ so we know that for all $\omega \in X \cap \valueprof, \neg \omega \in \satset(\history_2,\valueprofplus)$ and for all $\neg \omega \in X \cap \valueprof^\neg, \omega \in \satset(\history_2,\valueprofplus)$. Therefore $\history_2 \models \bigwedge\{\neg \omega \suchthat \omega \in X\}$ and we are done.

    Second we will show completeness. Suppose that for some $\omega \in \valueprof$, $\omega \neg \omega \notin X$. Furthermore, we know by necessity that either $\omega \in \satset(\history_1,\valueprofplus)$ or $\neg \omega \in \satset(\history_1,\valueprofplus)$. If $\omega \in \satset(\history_1,\valueprofplus)$ then since $\omega \notin X$ we know that $\omega \in \satset(\history_2,\valueprofplus)$. By a similar argument for $\neg \omega$ we can show that either $\history_1, \history_2 \models \omega$ or $\history_1, \history_2 \models \neg \omega$ and we are done.
\end{proof}

\subsection{Excuses}
\begin{table}[t]
    \centering
\subfloat[Table 3]{\begin{tabular}{|c|c|c|}
    \hline
    & $\strategy_{B}$ & $\strategy_{B}'$\\
    \hline
    $\strategy_A$ & $\omega_1$, $\omega_2$ & $\emptyset$\\
    \hline
    $\strategy_A'$ & $\emptyset$ & $\omega_1$, $\omega_2$ \\
    \hline
\end{tabular}}
\quad
\subfloat[Table 4]{\begin{tabular}{|c|c|c|c|}
    \hline
    & $\strategy_{B}$ & $\strategy_{B}'$ & $\strategy_{B}''$\\
    \hline
    $\strategy_A$ & $\omega_1$, $\omega_2, \omega_3$ & $\omega_1, \omega_2$ & $\emptyset$\\
    \hline
    $\strategy_A'$ & $\emptyset$ & $\omega_1$, $\omega_2, \omega_3$ & $\omega_1, \omega_2$\\
    \hline
    $\strategy_A''$ & $\omega_1, \omega_2$ & $\emptyset$ & $\omega_1, \omega_2, \omega_3$\\
    \hline
\end{tabular}}
\end{table}

In Table 3 we see that choosing either $\strategy_A$ or $\strategy_A'$ may lead Anna to be responsible for $\{\omega_1,\omega_2\}$. This seems somewhat unfair, as the agents in my model have no way do determine which of $\strategy_{B}$ and $\strategy_B'$ was more likely to occur (since they have no information about the goals, values or rationality of the other agents), meaning that they always risk responsibility. Therefore, against the ``accusation'' of responsibility via some strategy $\strategy'_i$, an agent may excuse themselves by presenting a rational justification for choosing $\strategy_i$ over $\strategy'_i$. We consider excuses only in cases of ``negative responsibility'' (responsibility for $X$ where $X \preceq \emptyset)$ as it is not clear what it would mean to give an excuse for a positive outcome.

\begin{definition}[Weak Excuse]
    If agent $i$ is responsible for $X \preceq \emptyset$ via $\strategy_i'$ in $\play(\jstrategy,\pldomain)$, then a weak excuse for $(i,\jstrategy,\pldomain,\strategy_i')$ is a strategy $\jstrategy_{-i}'$ for $\agentset \setminus \{i\}$ such that $\satset(\play((\jstrategy_{i},\jstrategy_{-i}'),\pldomain,\valueprofplus)$ is strictly preferred to $\satset(\play((\strategy_i',\strategy_{-i}'),\pldomain,\valueprofplus)$.
\end{definition}

\begin{property}[Acceptance of Weak Excuses] Given an MAS $\pldomain = (\game,s_0,k,\valueprof)$, an agent $i$ and a strategy $\jstrategy$ for $\agentset$, if $i$ is responsible for $X \subseteq \langlogic_{\ltllogic}$ in $\play(\jstrategy,\pldomain)$ then there exists some strategy $\strategy_i'$ for $i$ such that $i$ is responsible for $X$ via $\strategy_i'$ and there exists no weak excuse for $(i,\jstrategy,\pldomain,\strategy_i')$.
\end{property}

A weak excuse says that $i$ was (weakly) justified in choosing $\strategy_i$ over $\strategy_i'$ since there is at least one strategy for $\agentset \setminus \{i\}$ where $\strategy_i$ does better. However, we might prefer a stronger notion of excuse, requiring that the original strategy could have been preferred \textit{by at least as wide a margin} as the accusing strategy.

\begin{definition}[Strong Excuse]
    If agent $i$ is responsible for $X \preceq \emptyset$ via $\strategy_i'$ in $\play(\jstrategy,\pldomain)$, then a strong excuse for $(i,\jstrategy,\pldomain,\strategy_i')$ is a strategy $\jstrategy_{-i}'$ for $\agentset \setminus \{i\}$ such that $\satset(\play((\strategy_i,\jstrategy_{-i}'),\pldomain),\valueprofplus) \setminus \satset(\play((\strategy_i',\jstrategy_{-i}'),\pldomain),\valueprofplus)$ is preferred to $\emptyset$ and to $\satset(\play((\strategy_i',\jstrategy_{-i}),\pldomain),\valueprofplus) \setminus \satset(\play(\strategy,\pldomain),\valueprofplus)$.
\end{definition}

Note that since being a strong excuse is a strictly stronger requirement than being a weak excuse, all strong excuses are automatically weak excuses.

\begin{property}[Acceptance of Strong Excuses]
   Given an MAS $\pldomain = (\game,s_0,k,\valueprof)$, an agent $i$ and a strategy $\jstrategy$ for $\agentset$, if $i$ is responsible for $X \subseteq \langlogic_{\ltllogic}$ in the history $\play(\jstrategy,\pldomain)$ then there exists some strategy $\strategy_i'$ for $i$ such that $i$ is responsible for $X$ via $\strategy_i'$ and there exists no strong excuse for $(i,\jstrategy,\pldomain,\strategy_i')$.
\end{property}

However, while we have outlined why we think that strong excuses might be a more appealing notion than weak excuses, there is also an issue with this notion. The property ``acceptance of strong excuses'' relies on the intuition that if in some history $\play(\jstrategy,\pldomain)$ an agent $i$ cannot give a strong excuse for choosing $\strategy_i$ over some alternative $\strategy_i'$ (where $\satset(\play(\jstrategy,\pldomain),\valueprofplus) \prec \satset(\play((\strategy_i',\jstrategy_{-i}),\pldomain),\valueprofplus)$) then they cannot justify their choice of $\strategy_i$ over $\strategy_i'$ and therefore should have preferred $\strategy_i'$. The problem is that the preference relation that this implies is not transitive, and is possibly cyclic. In Table 4 we can see that in the history $\play((\strategy_i',\jstrategy_{-i}),\pldomain)$ $i$ cannot give a strong excuse for not choosing $\strategy_i$ implying that $\strategy_i$ should be preferred to $\strategy_i'$. In $\play((\strategy_i'',\jstrategy_{-i}'),\pldomain)$ we can use a similar argument to imply that $\strategy_i'$ should be preferred to $\strategy_i''$. However, in $\play((\strategy_i,\jstrategy_{-i}''),\pldomain)$ $i$ we can argue that $\strategy_i''$ should be preferred to $\strategy_i$! Correspondingly, we will focus on weak excuses rather than strong excuses. In particular, we can refine the definition of passive responsibility by considering weak excuses.

\begin{definition}[Inexcusable Passive Responsibility]
    Given an MAS $\pldomain = (\game,s_0,k,\valueprof)$, an agent $i$ and a strategy $\jstrategy$ for $\agentset$, $i$ is attributed inexcusable passive responsiblility for $X \subseteq \valueprofplus$ in $\play(\jstrategy,\pldomain)$ if and only if there exists some $\strategy_i'$ such that $i$ is responsible for $X$ via $\strategy_i'$ and there is no weak excuse for $(i,\jstrategy,\pldomain,\strategy_i')$.
\end{definition}

Furthermore, we can show that this notion satisfies all of the important properties of responsibility that we have outlined thus far.

\begin{theorem}\label{thm:inexcusableproperties}
    Inexcusable passive responsibility satisfies consistency, completeness,the acceptance of weak excuses and the acceptance of strong excuses.
\end{theorem}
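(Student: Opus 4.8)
The plan is to leverage the fact that inexcusable passive responsibility is exactly passive responsibility with one extra constraint imposed on the witnessing strategy, so that the four properties split into an inheritance argument, a definitional unfolding, and a short argument relating the two flavours of excuse.

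First I would record the elementary implication that inexcusable passive responsibility entails passive responsibility: if $i$ is inexcusably passively responsible for $X$ in $\play(\jstrategy,\pldomain)$ then by definition there is a strategy $\strategy_i'$ with $i$ responsible for $X$ via $\strategy_i'$, and dropping the ``no weak excuse'' clause this witnesses ordinary passive responsibility for $X$. Since consistency and completeness are both conditionals whose antecedent is merely ``$i$ is responsible for $X$'', and since Theorem~\ref{thm:passive_resp_att} establishes both for passive responsibility, they are inherited immediately; concretely, the very strategy $\strategy_i'$ supplied by the definition already plays the role of the witness in the consistency and completeness arguments of Theorem~\ref{thm:passive_resp_att}, so no new computation is required.

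Acceptance of weak excuses then holds by unfolding definitions: the property asks, given (inexcusable passive) responsibility for $X$, for some $\strategy_i'$ with $i$ responsible for $X$ via $\strategy_i'$ and with no weak excuse for $(i,\jstrategy,\pldomain,\strategy_i')$, which is verbatim the existential clause in the definition of inexcusable passive responsibility. For acceptance of strong excuses I would reuse the same witness $\strategy_i'$ together with the observation, already noted after the definition of strong excuse, that every strong excuse is a weak excuse: since $\strategy_i'$ admits no weak excuse it a fortiori admits no strong excuse. The one point deserving care is the scope condition $X \preceq \emptyset$ under which excuses are defined at all --- when $X \not\preceq \emptyset$ neither notion of excuse exists and both acceptance properties hold vacuously, while for $X \preceq \emptyset$ the arguments above apply unchanged.

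I expect essentially no obstacle here, as the substantive content is front-loaded into the definitions and into Theorem~\ref{thm:passive_resp_att}; the only thing to keep honest is the quantifier bookkeeping, namely that a single strategy $\strategy_i'$ --- the one furnished by inexcusable passive responsibility --- simultaneously discharges consistency, completeness, and both acceptance properties, which it does precisely because the no-weak-excuse requirement is strictly stronger than what any of the individual properties demands of it.
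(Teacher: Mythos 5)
Your proof is correct and follows essentially the same route as the paper's: consistency and completeness are inherited from Theorem~\ref{thm:passive_resp_att} via the witnessing strategy, acceptance of weak excuses holds by definition, and acceptance of strong excuses follows because every strong excuse is a weak excuse. Your additional attention to the $X \preceq \emptyset$ scope condition (handling the vacuous case) is a small refinement the paper leaves implicit, but it does not change the argument.
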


\begin{proof}
    Since $i$ bears inexcusable passive responsibility for $X$ only if $i$ is responsible for $X$ via some strategy $\strategy_i'$, we can use the same argument as for theorem \ref{thm:passive_resp_att} to show that inexcusable passive responsibility satisfies consistency and completeness. Inexcusable passive responsibility satisfies acceptance of weak excuses by definition and therefore also satisfies acceptance of strong excuses since all strong excuses are weak excuses.
\end{proof}

Again, by considering Table 3 we can show that passive responsibility does not satisfy acceptance of strong or weak excuses. Finally, we can show the notion of inexcusable passive responsibility is coherent with the single-value notion of liability that we introduced in Definition \ref{def:liability}:

\begin{theorem}
    Given an MAS $\pldomain = (\game,s_0,k,\valueprof)$, an agent $i$, a strategy $\jstrategy$ for $\agentset$ and a value $\omega \in \valueprof$, $i$ is liable for $\neg \omega$ in $\history = \play(\jstrategy,\pldomain)$ if and only if there is some $X \subseteq \valueprofplus$ such that $\neg \omega \in X$ and $i$ is attributed inexcusable passive responsibility for $X$ in $\history$.
\end{theorem}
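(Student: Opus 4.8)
The plan is to reduce both sides of the biconditional to statements about a single alternative strategy $\strategy_i'$, using two observations. The first, and most important, observation is that whenever the responsibility set is negative (so that the notion of a weak excuse applies), the condition ``there is no weak excuse for $(i,\jstrategy,\pldomain,\strategy_i')$'' is nothing but a restatement of weak dominance $\strategy_i \leq_\pldomain \strategy_i'$. Indeed, a weak excuse is a profile $\jstrategy_{-i}'$ at which $\satset(\play((\strategy_i,\jstrategy_{-i}'),\pldomain),\valueprofplus) \succ \satset(\play((\strategy_i',\jstrategy_{-i}'),\pldomain),\valueprofplus)$; since $\preceq$ is a total preorder, the absence of such a profile is exactly the requirement that $\satset(\play((\strategy_i,\jstrategy_{-i}'),\pldomain),\valueprofplus) \preceq \satset(\play((\strategy_i',\jstrategy_{-i}'),\pldomain),\valueprofplus)$ for every $\jstrategy_{-i}'$, which is precisely the definition of $\strategy_i \leq_\pldomain \strategy_i'$. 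I would isolate this equivalence as the first step.

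The second observation concerns membership of $\neg\omega$ in the responsibility set. Writing $\history_2 = \play((\strategy_i',\jstrategy_{-i}),\pldomain)$ and $X = \satset(\history,\valueprofplus) \setminus \satset(\history_2,\valueprofplus)$ (the set for which $i$ is responsible via $\strategy_i'$), I would use the fact that, by bivalence of $\ltllogic$ and consistency of the value base, for every value $\omega \in \valueprof$ exactly one of $\omega, \neg\omega$ lies in $\satset(\history',\valueprofplus)$ for any history $\history'$. From this, $\neg\omega \in X$ holds if and only if $\history \models \neg\omega$ and $\history_2 \models \omega$. These are exactly the two ``outcome'' clauses in the definition of liability for $\neg\omega$, so this observation ties the responsibility set directly to Definition~\ref{def:liability}.

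With these two facts the two directions become routine. For the forward direction, liability for $\neg\omega$ supplies a strategy $\strategy_i'$ with $\history_2 \models \omega$ and $\strategy_i \leq_\pldomain \strategy_i'$; I set $X = \satset(\history,\valueprofplus)\setminus\satset(\history_2,\valueprofplus)$, get $\neg\omega\in X$ from the second observation, and conclude ``no weak excuse'' from dominance by the first. For the converse, inexcusable passive responsibility for some $X \ni \neg\omega$ via $\strategy_i'$ yields, by the first observation, $\strategy_i \leq_\pldomain \strategy_i'$, while $\neg\omega\in X$ yields $\history\models\neg\omega$ and $\history_2\models\omega$; together these are exactly the clauses of liability, taking $\strategy_i'$ itself as the witnessing strategy.

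The step I expect to require the most care is the well-definedness of the excuse machinery: a weak excuse is only defined when the responsibility set satisfies $X \preceq \emptyset$, so I must verify this precondition wherever I invoke ``no weak excuse'', and I must ensure that the notion is not holding vacuously on a non-negative $X$. To this end I would prove a short lemma stating that $X = \satset(\history,\valueprofplus)\setminus\satset(\history_2,\valueprofplus) \preceq \emptyset$ if and only if $\satset(\history,\valueprofplus) \preceq \satset(\history_2,\valueprofplus)$. Its proof compares the two assignments value by value and shows that, at every priority level, the net score of $X$ equals exactly half the difference of the net scores of the two histories, so the two lexicographic comparisons coincide. In the forward direction this lemma converts the dominance inequality, instantiated at $\jstrategy_{-i}$, into $X \preceq \emptyset$; in the converse direction it guarantees that the inexcusable passive responsibility we are handed is genuinely a case of negative responsibility, so that ``no weak excuse'' carries its intended meaning rather than holding trivially.
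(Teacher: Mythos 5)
Your route is essentially the paper's own: both directions rest on the equivalence ``no weak excuse for $(i,\jstrategy,\pldomain,\strategy_i')$ iff $\strategy_i \leq_\pldomain \strategy_i'$'' (via totality of $\preceq$), together with the observation that $\neg\omega$ lies in the responsibility set iff $\history \models \neg\omega$ and $\history_2 \models \omega$; moreover your auxiliary lemma is exactly the corollary proved in the paper's appendix (instantiated with $Y = Z = \satset(\history_2,\valueprofplus)$), down to the same ``half the score difference'' computation. The forward direction as you set it up is correct, including the check that $X \preceq \emptyset$.

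The problem is in the converse direction, precisely at the step you flagged as delicate. You claim your lemma ``guarantees that the inexcusable passive responsibility we are handed is genuinely a case of negative responsibility.'' It cannot: the lemma translates $X \preceq \emptyset$ into $\satset(\history,\valueprofplus) \preceq \satset(\history_2,\valueprofplus)$ and back, but in the converse direction you are given neither inequality; the only way to obtain $\satset(\history,\valueprofplus) \preceq \satset(\history_2,\valueprofplus)$ is from weak dominance, which is exactly what you are trying to extract from ``no weak excuse.'' As written, that step is circular. Nor is the worry cosmetic: under the strict reading in which weak excuses are simply undefined (hence vacuously absent) whenever $X \not\preceq \emptyset$, the theorem itself is false. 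Take a single priority level with values $\omega_1,\omega_2,\omega_3$ and an agent with two (behaviourally distinct) strategies, where $(\strategy_i,\jstrategy_{-i})$ realises $\{\omega_1,\omega_2,\neg\omega_3\}$ and $(\strategy_i',\jstrategy_{-i})$ realises $\{\neg\omega_1,\neg\omega_2,\omega_3\}$: then $i$ is responsible via $\strategy_i'$ for $X = \{\omega_1,\omega_2,\neg\omega_3\} \succ \emptyset$, which contains $\neg\omega_3$ and would be vacuously ``inexcusable,'' yet $\strategy_i'$ loses to $\strategy_i$ at $\jstrategy_{-i}$, so it does not weakly dominate, and being the only candidate witness, $i$ is not liable for $\neg\omega_3$. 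The repair --- and what the paper's own proof implicitly does --- is to read a weak excuse for $(i,\jstrategy,\pldomain,\strategy_i')$ as nothing more than a profile $\jstrategy_{-i}'$ at which $\strategy_i$ strictly beats $\strategy_i'$, the guard $X \preceq \emptyset$ in its definition only marking when the notion is of interest. Under that reading your first observation holds unconditionally, the converse direction goes through with no appeal to the lemma at all, and the lemma is needed only in the forward direction, to confirm that the responsibility attributed there is indeed negative.
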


\begin{proof}
    First we will show (liability $\Rightarrow$ inexcusable passive responsibility). Suppose that $i$ is liable for $\neg \omega$. Therefore $\history_1 = \play(\jstrategy,\pldomain) \models \neg \omega$ and there exists some strategy $\strategy_i'$ such that $\strategy_i <_\pldomain \strategy_i'$ and $\history_2 = \play((\strategy_i',\jstrategy_{-i}),\pldomain) \models \omega$. Therefore $\neg \omega \in X = \satset(\history_1,\valueprofplus) \setminus \satset(\history_2,\valueprofplus)$. Since  $\strategy_i <_\pldomain \strategy_i'$ there exists no weak excuse for $(i,\jstrategy,\pldomain,\strategy_i')$ and $\satset(\history_1,\valueprofplus) \preceq \satset(\history_2,\valueprofplus)$ (so therefore $X \preceq \emptyset$) so $i$ is attributed inexcusable passive responsibility for $X$ and we are done.

    Now we show (inexcusable passive responsibility $\Rightarrow$ liability). Suppose that $i$ is attributed inexcusable passive responsibility for $X$ and that $\neg \omega \in X$. Therefore we know that there is some strategy $\strategy_i'$ such that $X = \satset(\history_1,\valueprofplus) \setminus \satset(\history_2,\valueprofplus)$ where $\history_1 = \play(\jstrategy,\pldomain)$ and $\history_2 = \play((\strategy_i',\jstrategy_{-i}),\pldomain)$. Therefore $\history_1 \models \neg \omega$ and $\history_2 \models \omega$. Since we also know that there exists no weak excuse for $(i,\jstrategy,\pldomain,\strategy_i')$ we can also conclude that $\strategy_i \leq_\pldomain$ so $i$ is liable for $\neg \omega$ and we are done. 
\end{proof}

\begin{table}[]
    \centering
    \begin{tabular}{|c|c|}
    \hline
    Notion & Properties \\
    \hline
    Passive Responsibility & Consistency, Completeness\\
    \hline
    Inexcusable Passsive Responsibility & Consistency, Completeness,\\
    & Acceptance of Weak Excuses\\
    \hline
    \end{tabular}
    \caption{A table summarising the properties of Passive Responsibility (Theorem \ref{thm:passive_resp_att}) and Inexcusable Passive Responsibility (Theorem \ref{thm:inexcusableproperties}).}
    \label{tab:my_label}
\end{table}

\section{Anticipating Regret and Responsibility}\label{sec:regret}

Anticipation is the process of predicting the possible outcomes of a particular strategy. Anticipation is therefore performed before rather than after strategy execution, meaning that it can be used in the process of strategy selection.

\subsection{Regret Anticipation}

Roughly speaking, we want agents to select strategies that generally lead to good outcomes. One way to evaluate outcomes is simply to consider the set of values satisfied with that outcome in conjunction with the relation $\preceq$. However, it can instead be useful to consider how good an outcome is relative to the other outcomes that could have been achieved. In other words, if the strategies of the other agents guarantee that the outcome of any strategy is at best mediocre (relative to our value base $\valueprof$) then an agent should feel quite satisfied with a mediocre outcome. Alternatively, if the strategies of the other agents guarantee that the outcome is at \textit{worst} mediocre, then an agent should not be satisfied with a mediocre outcome. This leads to the notion of the regret, which is traditionally defined as the difference between the utility of a given outcome and the greatest utility that could have been achieved given the strategies of all other agents. This idea was first introduced in decision theory \cite{Savage51,Niehans1948} and was later formalised in  game theory \cite{Linhart89}. Regret is a very demanding notion, since the only way to achieve zero regret is to achieve the best result possible given the strategies of all other agents, and it does not give any ``extra credit'' for avoiding an even worse outcome.

Formalising a notion of regret in a setting with no numerical measurements of utility (such as ours) is more complex than normal, since we cannot simply subtract one value from another. However, we can simply consider the ``relative regret'' from $\history_1$ to $\history_2$ as $\satset(\history_1,\valueprofplus) \setminus \satset(\history_2,\valueprofplus)$. This should be understood as the regret that an agent would feel after $\history_1$ has occurred when they consider $\history_2$. For example, if $\valueprof = \{\omega_1,\omega_2\}$, $\history_1 \models \neg \omega_1 \land \neg \omega_2$ and $\history_2 \models \omega_1 \land \neg \omega_2$ then the regret from $\history_1$ to $\history_2$ is $\neg \omega_1$, since the agent regrets that $\omega_1$ was violated, but does not regret the violation of $\omega_2$ since this occurred in both histories.

We can also consider the regret associated with an individual strategy $\strategy_i$ for some agent $i$. Since there are many possible histories, and therefore many possible regret sets, that can be associated with $\strategy_i$, the standard approach in the literature is to select the worst value of regret that it could possibly experience. We call this ``anticipated regret'' because the anticipated regret represents a possible future result of executing the strategy, which can be calculated before actually executing the strategy and thus can be fruitfully used to evaluate strategies. The kind of strategies that agents should prefer are ``regret-minimising'' strategies.

\begin{definition}[Regret Minimisation]\index{Regret Minimisation}
    Given an MAS $\pldomain = (\game,s_0,k,\valueprof)$, an agent $i$ and a strategy $\strategy_i$, let $X$ be anticipated regret of $\strategy_i$ (the worst, according to $\preceq$, relative regret from any $\history_1 = \play((\strategy_i,\jstrategy_{-i}),\pldomain)$ to any $\history_2 = \play((\strategy_i',\jstrategy_{-i}),\pldomain)$). $\strategy_i$ is regret-minimising if and only if for any $\strategy_i'$, with anticipated regret $Y$, $Y \preceq X$.
\end{definition}

An advantage of symbolic regret over more standard, numerical forms of regret is that it can produce much more meaningful explanations of behaviour, particularly to a non-technical audience. A numerical justification for avoiding a particular strategy might be ``$\strategy'_i$ anticipates regret of $0.8$ compared to $0.4$ for $\strategy_i$'' whereas with symbolic regret we can say ``both $\strategy_i$ and $\strategy'_i$ lead to the avoidable violation of $\omega_1$ in the worst-case, but $\strategy_i$ at least has the compensation of satisfying $\omega_2$'' (i.e the anticipated regret of $\strategy_i$ is $\{\neg \omega_1, \omega_2\}$ whereas the anticipated regret of $\strategy'_i$ is $\{\neg \omega_1\}$).

\subsection{Responsibility Anticipation}\label{sec:responsibilityant}

Previous work on responsibility anticipation \cite{ECAIpaper} has only considered responsibility for single values, which simplifies the process of responsibility anticipation; if any possible history for $\strategy_i$ attributes responsibility for $\omega$, then $\strategy_i$ anticipates responsibility for $\omega$. This can be generalised to multiple values, but we must be careful about how we do it, as the following example shows.

\begin{table}[]
    \centering
\subfloat[Table 5]{
\begin{tabular}{|c|c|c|c|}
    \hline
    & $\strategy_{B}$ & $\strategy_{B}'$ & $\strategy_{B}''$\\
    \hline
    $\strategy_A$ & $\omega_1$, $\omega_2$ $\omega_3$, $\omega_4$ & $\omega_1$, $\omega_2$ $\omega_3$, $\omega_5$ &  $\omega_1$, $\omega_2$, $\omega_4$, $\omega_5$\\
    \hline
    $\strategy_A'$ & $\omega_1$, $\omega_2$ $\omega_3$ & $\omega_1$, $\omega_2$ $\omega_3$ & $\omega_1$, $\omega_2$ $\omega_3$ \\
    \hline
    $\strategy_A''$ & $\omega_5$ & $\omega_4$ & $\omega_3$ \\
    \hline
\end{tabular}}
\quad
\subfloat[Table 6]{
\begin{tabular}{|c|c|c|}
    \hline
    & $\strategy_{B}$ & $\strategy_{B}'$\\
    \hline
    $\strategy_A$ & $\emptyset$ & $\omega_1$\\
    \hline
    $\strategy_A'$ & $\emptyset$ & $\emptyset$\\
    \hline
\end{tabular}}
\end{table}
For example, we can see in Table 5 that the set of values that Anna is attributed responsibility for with $\strategy_A$ in at least one possible history is strictly larger than the equivalent set for $\strategy_A'$ ($\{\omega_3,\omega_4,\omega_5\}$ vs $\{\omega_4,\omega_5\}$), even though $\strategy_A$ strictly dominates $\strategy_A'$, due to always completing four of the tasks in the garden, instead of three. However, we can avoid this issue if we stick to responsibility sets from single histories.

Furthermore, to allow for strategy evaluation via considerations of responsibility it is useful to be able to pick a single set $X$ to represent the degree of responsibility that $i$ anticipates for choosing $\strategy_i$. Since we have no information regarding how the other agents may choose their strategies, the most obvious choices are to pick either the best or the worst of the sets for which $i$ is responsible\footnote{In the language of comparing sets of objects \cite{Barberà2004} we are in the situation of ``complete uncertainty'' meaning that most approaches revolve around the best and/or worst elements of the set.}. However, there are problems with using the best set.

In Table 6 we can see that in any history resulting from Ben choosing $\strategy_B$, Anna is responsible for $\emptyset$ regardless of which strategy she chooses, since Ben has made it impossible to complete any of the tasks. This means that the best responsibility sets for $\strategy_A$ and $\strategy_A'$ are both $\emptyset$, meaning that they would both be considered equally good in terms of anticipated responsibility, even though $\strategy_i$ weakly dominates $_\pldomain \strategy_i'$. Therefore I propose the following definition of responsibility anticipation:

\begin{definition}[Responsibility Anticipation]
    Given an MAS $\pldomain = (\game,s_0,k,\valueprof)$, an agent $i$, a strategy $\strategy_i$ and a type of responsibility $\Gamma$, $i$ anticipates $\Gamma$-responsibility for $X \subseteq \valueprofplus$ in $\play(\jstrategy,\pldomain)$ if and only if $X$ is the worst (according to $\preceq$) subset of $\valueprofplus$ such that there exists some joint strategy $\jstrategy_{-i}$ for $\agentset \setminus \{i\}$ such that $i$ is $\Gamma$-responsible for $X$ in $\play((\strategy,\jstrategy_{-i}),\pldomain)$.
\end{definition}

One feature of this definition, which will be useful in later proofs, is that responsibility anticipation is provably pessimistic, the best-case responsibility that an agent can anticipate is $\emptyset$ (neither positively nor negatively responible).

\begin{lemma}\label{lemma:min_responsibility_value}
    If $i$ anticipates passive or inexcusable passive responsibility for $X$ in $\pldomain$ with $\strategy_i$, then $X \preceq \emptyset$.
\end{lemma}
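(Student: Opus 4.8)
The plan is to exploit the fact that responsibility anticipation selects the $\preceq$-worst set among all sets for which $i$ can be made $\Gamma$-responsible by some choice of $\jstrategy_{-i}$, and to show that $\emptyset$ always sits inside that collection. Once $\emptyset$ is exhibited as one of the candidate responsibility sets, minimality of the anticipated set $X$ with respect to $\preceq$ forces $X \preceq \emptyset$ immediately, which is exactly the claim. So the whole argument reduces to a membership check rather than any comparison of specific sets.

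First I would show that, for both notions, $\emptyset$ is a candidate. Fix any $\jstrategy_{-i}$ and consider the \emph{null accusation} $\strategy_i' = \strategy_i$. The two histories $\play((\strategy_i,\jstrategy_{-i}),\pldomain)$ and $\play((\strategy_i',\jstrategy_{-i}),\pldomain)$ are then literally the same history $\history$, so by the definition of responsibility via a strategy, $i$ is responsible for $\satset(\history,\valueprofplus)\setminus\satset(\history,\valueprofplus)=\emptyset$ via $\strategy_i'$. Hence $i$ is passively responsible for $\emptyset$ under this $\jstrategy_{-i}$, which settles the passive case.

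For inexcusable passive responsibility I would additionally verify that no weak excuse exists for $(i,\jstrategy,\pldomain,\strategy_i')$ when $\strategy_i'=\strategy_i$. Since $\emptyset \preceq \emptyset$, the weak-excuse notion does apply here; but a weak excuse would be a strategy $\jstrategy_{-i}'$ making $\satset(\play((\strategy_i,\jstrategy_{-i}'),\pldomain),\valueprofplus)$ strictly preferred to $\satset(\play((\strategy_i',\jstrategy_{-i}'),\pldomain),\valueprofplus)$, and with $\strategy_i'=\strategy_i$ these two sets coincide for every $\jstrategy_{-i}'$. No set is strictly preferred to itself, so no weak excuse can exist, and $i$ is inexcusably passively responsible for $\emptyset$ as well.

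Finally, since the collection of candidate responsibility sets is finite (there are finitely many strategies, hence finitely many histories and responsibility sets) and $\preceq$ is a total preorder obtained by lexicographic comparison of the difference vectors, the anticipated set $X$ is a $\preceq$-least element of this collection; as $\emptyset$ belongs to it, $X \preceq \emptyset$ follows. I expect the only real subtlety to be the inexcusable case: one must confirm that the degenerate choice $\strategy_i'=\strategy_i$ genuinely blocks every weak excuse, which it does precisely because the accusing and accused histories collapse to a single history, so the strict-preference condition defining a weak excuse can never be met.
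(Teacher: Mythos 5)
Your proof is correct and follows essentially the same route as the paper's: exhibit $\emptyset$ as a candidate responsibility set via the ``null accusation'' $\strategy_i' = \strategy_i$ (noting that no weak excuse can exist since the accusing and accused histories coincide), then conclude $X \preceq \emptyset$ because the anticipated set is the $\preceq$-worst candidate. Your write-up is in fact more explicit than the paper's terse argument, particularly in spelling out why the strict-preference condition for a weak excuse can never be met.
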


\begin{proof}
    If $\strategy_i$ anticipates responsibility for $X$ then $X$ must be the worst subset of $\valueprofplus$ such that there is some strategy $\jstrategy_{-i}$ such that $i$ is responsible for $X$ in $\history = \play((\strategy_i,\jstrategy_{-i}),\pldomain)$. In the history $\history$ $i$ will always be attributed both passive and inexcusable passive responsibility for $\emptyset$ via $\strategy_i$ since there can never be a weak excuse for $(i,\jstrategy,\pldomain,\strategy_i')$. Therefore the worst valid $X \subseteq \valueprofplus$ must be at most as good as $\emptyset$.
\end{proof}

A responsibility-conscious agent should prefer strategies where they minimise their degree of expected responsibility. This should ensure that they maximise their causal contribution towards a generally more positive outcome. 

\begin{definition}[Responsibility Minimisation]
     Given an MAS $\pldomain = (\game,s_0,$ $k,\valueprof)$, an agent $i$, a strategy $\strategy_i$ and a type of responsibility $\Gamma$, let $X \subseteq \valueprof$ be such that $\strategy_i$ anticipates $\Gamma$ responsibility for $X$. Then we say that $\strategy_i$ is $\Gamma$ responsibility-minimising for $i$ if and only if for every alternative strategy $\strategy_i'$ such that $i$ anticipates responsibility for $Y$, $Y \preceq X$.
\end{definition}

From this definition we can describe a responsibility-minimising strategy for the notions of passive and inexcusable passive responsibility. However, it also turns out that both of these notions can be described as some combination of non-dominated and regret-minimising strategies.

\begin{theorem}\label{thm:passiveequivalence}
   Given an MAS $\pldomain = (\game,s_0,k,\valueprof)$, an agent $i$ and a a strategy $\strategy_i$, $\strategy_i$ is passive responsibility-minimising if and only if $\strategy_i$ is regret-minimising.
\end{theorem}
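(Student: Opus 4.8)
The plan is to reduce the biconditional to a single definitional identity: for a fixed strategy $\strategy_i$, the anticipated passive responsibility and the anticipated regret of $\strategy_i$ denote the same subset of $\valueprofplus$, up to $\preceq$-equivalence. The key observation is that, fixing any $\jstrategy_{-i}$ for $\agentset \setminus \{i\}$ and any alternative $\strategy_i'$, and writing $\history_1 = \play((\strategy_i,\jstrategy_{-i}),\pldomain)$ and $\history_2 = \play((\strategy_i',\jstrategy_{-i}),\pldomain)$, the relative regret from $\history_1$ to $\history_2$ is by definition $\satset(\history_1,\valueprofplus) \setminus \satset(\history_2,\valueprofplus)$, which is exactly the set $X$ for which $i$ is responsible via $\strategy_i'$ in $\history_1$. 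So the two notions agree set-for-set on each pair $(\jstrategy_{-i},\strategy_i')$.

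First I would make this identity explicit and note that it holds uniformly over all pairs $(\jstrategy_{-i},\strategy_i')$. Since $i$ is passively responsible for $X$ in $\play((\strategy_i,\jstrategy_{-i}),\pldomain)$ precisely when some witnessing $\strategy_i'$ realises $X$ as this set difference, the family of all relative-regret sets of $\strategy_i$ (ranging over every $\jstrategy_{-i}$ and every $\strategy_i'$) is literally the same family as the sets for which $i$ is passively responsible in some history arising from $\strategy_i$. I would stress that both definitions hold $\jstrategy_{-i}$ fixed when forming $\history_1$ and $\history_2$; checking that the two definitions quantify over the same objects is the one place where care is needed, and once it is confirmed the rest is bookkeeping.

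Second I would pass to the $\preceq$-worst element of this common family. Because $k$ is finite there are only finitely many histories, hence finitely many candidate sets, and because $\preceq$ is a total preorder (a lexicographic comparison of the integer differences $|X \cap \valueset_n| - |X \cap \valuesetminus_n|$) a $\preceq$-worst element exists and is determined up to $\preceq$-equivalence. As the two families coincide, so do their worst elements: the anticipated regret of $\strategy_i$ is $\preceq$-equivalent to its anticipated passive responsibility. Lemma~\ref{lemma:min_responsibility_value} is consistent with this, since taking $\strategy_i' = \strategy_i$ always contributes the set $\emptyset$, forcing the worst value to be at most as good as $\emptyset$ in both readings.

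Finally I would transport this equivalence through the two minimisation conditions. Given arbitrary $\strategy_i$ and $\strategy_i'$, let their anticipated regrets be $X,Y$ and their anticipated passive responsibilities be $X',Y'$; by the preceding step $X$ and $X'$ agree, and $Y$ and $Y'$ agree, up to $\preceq$-equivalence, so $Y \preceq X$ holds iff $Y' \preceq X'$ holds. Hence $\strategy_i$ meets the regret-minimisation inequality against every competitor iff it meets the passive-responsibility-minimisation inequality against every competitor, which is exactly the claimed biconditional. I do not expect a serious obstacle beyond the quantifier-alignment check noted above; the substance of the theorem is the recognition that symbolic relative regret and the responsibility-via-strategy set difference are the same object.
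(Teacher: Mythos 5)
Your proposal is correct and follows essentially the same route as the paper's own proof: both reduce the biconditional to the observation that, since responsibility via a strategy and relative regret are definitionally the same set difference $\satset(\history_1,\valueprofplus) \setminus \satset(\history_2,\valueprofplus)$ quantified over the same pairs $(\jstrategy_{-i},\strategy_i')$, the anticipated passive responsibility and anticipated regret of any strategy coincide, from which the equivalence of the two minimisation notions is immediate. Your version is in fact somewhat more careful than the paper's, spelling out the quantifier alignment, the existence of $\preceq$-worst elements via finiteness, and the final transport through the minimisation conditions, all of which the paper leaves implicit.
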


\begin{proof}
    To prove this, it is sufficient to prove that given an MAS $\pldomain = (\game,s_0,k,\valueprof)$, an agent $i$ and a strategy $\strategy_i$, the anticipated passive responsibility for $i$ with $\strategy_i$ is exactly the anticipated regret for $i$ with $\strategy_i$.

    Let $X$ be the anticipated passive responsibility for $i$ with $\strategy_i$ in $\pldomain$. Therefore $X$ is the worst (according to $\preceq$) subset of $\valueprof$ such that there exists some strategy $\jstrategy_{-i}$ such that $i$ is passively responsible for $X$ in $\play((\strategy_i,\jstrategy_{-i}),\pldomain)$. This means that $X$ is the worst subset of $\valueprof$ such that there exists strategy $\jstrategy_{-i}$ for $\agentset \setminus \{i\}$ and  a strategy $\strategy_i'$ for $i$ such that $X = \satset(\play((\strategy_i,\jstrategy_{-i}),\pldomain),\valueprofplus) \setminus \satset(\play((\strategy_i',\jstrategy_{-i}),\pldomain),\valueprofplus)$. However, this is enough to show that $X$ is also the anticipated regret of $i$ with $\strategy_i$ in $\pldomain$.
\end{proof}

\begin{theorem}\label{thm:inexcusableequivalence}
    Given an MAS $\pldomain = (\game,s_0,k,\valueprof)$, an agent $i$ and a a strategy $\strategy_i$, $\strategy_i$ is inexcusable passive responsibility-minimising if and only if $\strategy_i$ is non-dominated.
\end{theorem}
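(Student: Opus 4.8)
The plan is to reduce the statement to a clean characterisation: $\strategy_i$ is non-dominated if and only if its anticipated inexcusable passive responsibility is $\preceq$-equivalent to $\emptyset$ (and is strictly below $\emptyset$ otherwise). The first step is a translation lemma identifying the excuse condition with weak dominance: for any $\strategy_i'$, there is no weak excuse for $(i,\jstrategy,\pldomain,\strategy_i')$ precisely when $\strategy_i \leq_\pldomain \strategy_i'$. Indeed, a weak excuse is a strategy $\jstrategy_{-i}'$ for which $\satset(\play((\strategy_i,\jstrategy_{-i}'),\pldomain),\valueprofplus)$ is strictly preferred to $\satset(\play((\strategy_i',\jstrategy_{-i}'),\pldomain),\valueprofplus)$; since $\preceq$ is total on subsets of $\valueprofplus$, the existence of such a witness is exactly the failure of $\strategy_i \leq_\pldomain \strategy_i'$. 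Consequently, $i$ is inexcusably passively responsible for $X$ in $\play((\strategy_i,\jstrategy_{-i}),\pldomain)$ if and only if there is a dominating alternative $\strategy_i'$ (i.e. $\strategy_i \leq_\pldomain \strategy_i'$) with $X = \satset(\history_1,\valueprofplus)\setminus\satset(\history_2,\valueprofplus)$.

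The technical heart is to relate the responsibility set $X=\satset(\history_1,\valueprofplus)\setminus\satset(\history_2,\valueprofplus)$ to the comparison between $\history_1=\play((\strategy_i,\jstrategy_{-i}),\pldomain)$ and $\history_2=\play((\strategy_i',\jstrategy_{-i}),\pldomain)$. Writing $s_n(Z)=|Z\cap\valueset_n|-|Z\cap\valuesetminus_n|$ for the level-$n$ score of $Z\subseteq\valueprofplus$, the fact that each value $\omega\in\valueset_n$ contributes exactly one of $\omega,\neg\omega$ to every $\satset$ yields, by a short count, $s_n(X)=\tfrac12\bigl(s_n(\satset(\history_1,\valueprofplus))-s_n(\satset(\history_2,\valueprofplus))\bigr)$. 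Hence the level-$n$ score of $X$ has the same sign as the difference of the level-$n$ scores of $\history_1$ and $\history_2$, so the lexicographic comparison of $X$ with $\emptyset$ agrees with that of $\satset(\history_1,\valueprofplus)$ with $\satset(\history_2,\valueprofplus)$; in particular $X\prec\emptyset$ iff $\satset(\history_1,\valueprofplus)\prec\satset(\history_2,\valueprofplus)$, and $X$ is $\preceq$-equivalent to $\emptyset$ iff $\satset(\history_1,\valueprofplus)$ and $\satset(\history_2,\valueprofplus)$ are $\preceq$-equivalent.

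With these tools I would prove the characterisation in both directions. If $\strategy_i$ is non-dominated, every $\strategy_i'$ with $\strategy_i\leq_\pldomain\strategy_i'$ also satisfies $\strategy_i'\leq_\pldomain\strategy_i$, so for every $\jstrategy_{-i}$ the histories $\history_1,\history_2$ are $\preceq$-equivalent and hence every attainable $X$ is $\preceq$-equivalent to $\emptyset$; since $\emptyset$ is always attained (take $\strategy_i'=\strategy_i$), the anticipated responsibility is $\preceq$-equivalent to $\emptyset$. Conversely, if $\strategy_i$ is dominated there is $\strategy_i'$ with $\strategy_i\leq_\pldomain\strategy_i'$ and some $\jstrategy_{-i}$ at which $\satset(\history_1,\valueprofplus)\prec\satset(\history_2,\valueprofplus)$; by the translation lemma $\strategy_i'$ witnesses inexcusable passive responsibility there, and by the sign computation the corresponding $X\prec\emptyset$, so the worst-case anticipated responsibility is strictly below $\emptyset$. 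Finally I combine this with Lemma~\ref{lemma:min_responsibility_value}, which guarantees every anticipated inexcusable passive responsibility set is $\preceq\emptyset$, and with the existence of at least one non-dominated strategy: the $\preceq$-maximum attainable anticipated responsibility is therefore $\emptyset$, attained exactly by the non-dominated strategies. Since $\strategy_i$ is inexcusable passive responsibility-minimising iff its anticipated responsibility is this $\preceq$-maximum, the equivalence with non-domination follows.

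The main obstacle I expect is bookkeeping rather than depth. First, one must pin down that ``no weak excuse'' coincides with weak dominance, which hinges on totality of $\preceq$ and on the fact that weak excuses are only defined for $X\preceq\emptyset$ while Lemma~\ref{lemma:min_responsibility_value} ensures anticipated responsibility never leaves that regime (and $\strategy_i\leq_\pldomain\strategy_i'$ itself forces $X\preceq\emptyset$ at each $\jstrategy_{-i}$). Second, the sign computation for $X$ versus $\emptyset$ requires care because $\satset(\history,\valueprofplus)$ records both satisfied and violated values, so that a single value flipping between $\history_1$ and $\history_2$ shifts each level score by $2$ while contributing only $\pm1$ to $X$. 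A secondary subtlety is that ``the worst $X$'' in responsibility anticipation is determined only up to $\preceq$-equivalence, so all comparisons must be read at the level of $\preceq$-classes, which is exactly what responsibility-minimisation requires.
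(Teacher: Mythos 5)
Your proof is correct and takes essentially the same route as the paper's: both directions rest on the same two facts — that non-domination supplies a weak excuse against any accusation witnessing some $X \prec \emptyset$, while asymmetric weak domination yields an inexcusable accusation with $X \prec \emptyset$ — and both close the argument by combining Lemma~\ref{lemma:min_responsibility_value} with the existence of a non-dominated strategy. The two lemmas you isolate explicitly (that ``no weak excuse'' coincides with weak dominance under totality of $\preceq$, and the level-score computation relating the sign of $X$ to the comparison of $\satset(\history_1,\valueprofplus)$ with $\satset(\history_2,\valueprofplus)$) are precisely what the paper uses implicitly, the latter being its appendix corollary, so your write-up is a more explicit organisation of the same argument rather than a different one.
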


\begin{proof}
    First we will show that (non-dominated $\Rightarrow$ inexcusable passive responsibility-minimising). Suppose that $\strategy_i$ is non-dominated for $i$ in $\pldomain$. Let $X \subseteq \valueprofplus$ be such that $X \prec \emptyset$ and there is some strategy $\jstrategy_{-i}$ for $\agentset \setminus \{i\}$ and strategy $\strategy_{i}'$ for $i$ such that $i$ is responsible for $X$ via $\strategy_i'$ in $\play((\strategy_i,\jstrategy_{-i}),\pldomain)$. Since $X \prec \emptyset$ we can conclude that $\satset(\play((\strategy_i,\jstrategy_{-i}),\pldomain),\valueprofplus) \prec \satset(\play((\strategy_i',\jstrategy_{-i}),\pldomain),\valueprofplus)$. Since $\strategy_i$ is non-dominated there must also exist some $\jstrategy_{-i}'$ such that\\ $\satset(\play((\strategy_i',\jstrategy_{-i}'),\pldomain),\valueprofplus) \prec \satset(\play((\strategy_i,\jstrategy_{-i}'),\pldomain),\valueprofplus)$. Therefore $\jstrategy_{-i}'$ is a weak excuse for $(i,(\strategy_{i},\jstrategy_{-i}),\pldomain,\strategy_i')$. This means that $i$ has a weak excuse for any $X \prec \emptyset$ that $i$ would otherwise be responsible for. Therefore by Lemma \ref{lemma:min_responsibility_value} we have that $i$ anticipates responsibility for $\emptyset$ with $\strategy_i$ and that $\strategy_i$ is inexcusable passive responsibility-minimising.

    Now we show that (not non-dominated $\Rightarrow$ not inexcusable passive responsibility-minimising). Suppose that $\strategy_i$ is asymmetrically weakly dominated by some strategy $\strategy_i'$. Therefore there is some strategy $\jstrategy_{-i}$ for $\agentset \setminus \{i\}$ such that\\  $\satset(\play((\strategy_i,\jstrategy_{-i}),\pldomain),\valueprofplus) \prec \satset(\play((\strategy_i',\jstrategy_{-i}),\pldomain),\valueprofplus)$ and there is no strategy $\jstrategy_{-i}'$ for $\agentset \setminus \{i\}$ such that  $\satset(\play((\strategy_i',\jstrategy_{-i}'),\pldomain),\valueprofplus) \prec$\\ $\satset(\play((\strategy_i,\jstrategy_{-i}'),\pldomain),\valueprofplus)$. Therefore we know that $X =$\\ $\satset(\play((\strategy_i,\jstrategy_{-i}),\pldomain),\valueprofplus) \setminus \satset(\play((\strategy_i',\jstrategy_{-i}),\pldomain),\valueprofplus) \prec \emptyset$ and that there cannot exist an excuse for $(i,(\strategy_{i},\jstrategy_{-i}),\pldomain,\strategy_i')$. Therefore $\strategy_i$ anticipates inexcusable passive responsibility for $X$ where $X \prec \emptyset$.

    Furthermore we know that there must exist a strategy $\strategy_i''$ for $i$ in $\pldomain$ that is non-dominated and by the above argument we see that $\strategy_i''$ anticipates responsibility for $\emptyset$. Therefore $\strategy_i$ is not passive responsibility-minimising.
\end{proof}

Since regret-minimisation and being non-dominated are both desirable properties, we argue that agents attempting to minimise their anticipated responsibility should attempt to minimise both passive and inexcusable passive responsibility. We can also show that minimising one notion never comes at the cost of failing to minimise the other. The requirement that agents should minimise both passive and inexcusable passive responsibility is consistent with the idea that having an excuse reduces an agent's degree of responsibility, but may not remove it entirely. 

\begin{theorem}\label{thm:alwaysoverlap}
    Given an MAS $\pldomain = (\game,s_0,k,\valueprof)$ and an agent $i$, there exists some strategy $\strategy_i$ that is both regret-minimising and non-dominated for $i$.
\end{theorem}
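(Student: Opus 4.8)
The plan is to work directly with the two properties named in the statement and to show that the set of regret-minimising strategies must contain a non-dominated one. Write $\mathcal{R}$ for the set of regret-minimising strategies of $i$ in $\pldomain$. Since $k$ is finite there are only finitely many strategies, and $\preceq$ is a total preorder on anticipated-regret values, so a $\preceq$-best anticipated regret is attained and $\mathcal{R}$ is nonempty. I would then order $\mathcal{R}$ by asymmetric weak dominance and argue that a dominance-maximal element of $\mathcal{R}$ is in fact non-dominated among \emph{all} of $i$'s strategies.

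The engine of the argument, and the step I expect to be the main obstacle, is a monotonicity lemma: if $\strategy_i'$ weakly dominates $\strategy_i$ (i.e. $\strategy_i \leq_{\pldomain} \strategy_i'$), then the anticipated regret of $\strategy_i'$ is at least as good (w.r.t. $\preceq$) as that of $\strategy_i$. The difficulty is that regret is a set difference compared through $\preceq$, not a number, so ``subtracting the best response'' is not literally available. I would get around this by passing to the net-count representation underlying $\preceq$: to each history $\history$ associate the integer vector whose $n$-th entry is $|\satset(\history,\valueprofplus) \cap \valueset_n| - |\satset(\history,\valueprofplus) \cap \valuesetminus_n|$, so that $\preceq$ is exactly the lexicographic order on these vectors. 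A direct count over the four cases (both histories satisfy, both violate, or exactly one satisfies a given value) shows that for any two histories $\history_1,\history_2$ the net-count vector of the relative regret $\satset(\history_1,\valueprofplus)\setminus\satset(\history_2,\valueprofplus)$ equals, up to a positive global factor, the difference of the two histories' vectors. Hence comparing regret sets through $\preceq$ reduces to comparing these vector differences lexicographically.

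With this representation the lemma is routine. For a fixed $\jstrategy_{-i}$ the worst relative regret of $\strategy_i$ is attained against a best response, whose vector $M(\jstrategy_{-i})$ is the $\preceq$-maximum achievable value vector given $\jstrategy_{-i}$ and so does not depend on $\strategy_i$. The per-$\jstrategy_{-i}$ regret of $\strategy_i$ therefore corresponds to $v(\play((\strategy_i,\jstrategy_{-i}),\pldomain)) - M(\jstrategy_{-i})$, and the anticipated regret corresponds to the lexicographic minimum of this over $\jstrategy_{-i}$. Since $\strategy_i \leq_{\pldomain} \strategy_i'$ gives $v(\play((\strategy_i,\jstrategy_{-i}),\pldomain)) \preceq v(\play((\strategy_i',\jstrategy_{-i}),\pldomain))$ for every $\jstrategy_{-i}$, subtracting the common $M(\jstrategy_{-i})$ preserves this order, and a minimum is monotone under pointwise $\preceq$-domination; hence the anticipated regret of $\strategy_i'$ is at least as good as that of $\strategy_i$, as required.

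Finally I would close the argument. Asymmetric weak dominance is the strict part of the preorder $\leq_{\pldomain}$, hence a strict partial order, so on the finite set $\mathcal{R}$ it has a maximal element $\strategy_i^*$. If $\strategy_i^*$ were dominated by some $\strategy_i' \in \stratset$, the lemma would give that $\strategy_i'$ has anticipated regret at least as good as $\strategy_i^*$; but $\strategy_i^*$ is regret-minimising, so $\strategy_i'$ is regret-minimising too, i.e. $\strategy_i' \in \mathcal{R}$, contradicting the maximality of $\strategy_i^*$ in $\mathcal{R}$. Hence $\strategy_i^*$ is non-dominated, and being in $\mathcal{R}$ it is also regret-minimising, which proves the claim. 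Via Theorems \ref{thm:passiveequivalence} and \ref{thm:inexcusableequivalence} this equivalently exhibits a single strategy that simultaneously minimises passive and inexcusable passive responsibility.
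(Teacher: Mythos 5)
Your proposal is correct and takes essentially the same route as the paper: the paper's own proof also rests on showing that anticipated regret is monotone under weak dominance (established, as in your argument, via the score-vector cancellation fact proved in the paper's appendix), and then extracts a non-dominated regret-minimising strategy by finiteness of the strategy set. Your explicit step of taking a dominance-maximal element of the set $\mathcal{R}$ of regret-minimising strategies is just a cleaner packaging of the chain argument the paper leaves implicit in its claim that ``it is enough to show that any strategy that asymmetrically weakly dominates a regret-minimising strategy must also be regret-minimising.''
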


\begin{proof}
Since each strategy has an ``anticipated regret'' and there are finitely many distinct strategies, there must exist a regret-minimising strategy $\strategy_i$. It is enough to show that any strategy that asymmetrically weakly dominates $\strategy_i$ must also be regret-minimising.

Let $X$ be the anticipated regret for $\strategy_i$. Suppose that $\strategy_i' <_\pldomain \strategy_i$ and let $X'$ be the anticipated regret for $\strategy_i'$. Suppose for contradiction that $\strategy_i'$ is not regret-minimising, meaning that $X' \prec X$. We also know that there exists some strategy $\jstrategy_{-i}$ for $\agentset \setminus \{i\}$ such that $X' = \satset(\play((\strategy_i',\jstrategy_{-i}),\pldomain),\valueprofplus) \setminus \satset(\history_1,\valueprofplus)$ where $\history_1$ is the best (according to $\preceq$) history compatible with $\jstrategy_{-i}$ and $\pldomain$. Let $Y = \satset(\play((\strategy_i,\jstrategy_{-i}),\pldomain),\valueprofplus) \setminus \satset(\history_1,\valueprofplus)$. Since $X$ is the anticipated regret of $\strategy_i$ we know that $X \preceq Y$ and therefore $X' \prec Y$.

Since $X' = \satset(\play((\strategy_i',\jstrategy_{-i}),\pldomain),\valueprofplus) \setminus \satset(\history_1,\valueprofplus)$, and $Y = $\\ $\satset(\play((\strategy_i,\jstrategy_{-i}),\pldomain),\valueprofplus) \setminus \satset(\history_1,\valueprofplus)$ and $X' \prec Y$ we can conclude that $\satset(\play((\strategy_i',\jstrategy_{-i}),\pldomain),\valueprofplus) \prec \satset(\play((\strategy_i,\jstrategy_{-i}),\pldomain),\valueprofplus)$\footnote{The proof of the validity of this deduction can be found in the appendix.}. However, this is a contradiction as we supposed that $\strategy_i <_\pldomain \strategy_i'$. Therefore $\strategy_i$ must be regret-minimising and we are done.
\end{proof}

\begin{figure}[h!]
\begin{center}
\includegraphics[scale = 0.75]{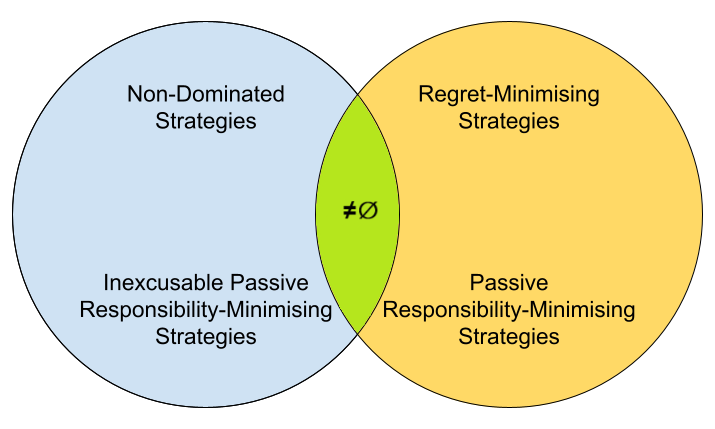}
\caption{A visual summary of the main results of this chapter. The two circles indicate that regret-minimising strategies are exactly passive resbonsibility-minimising strategies (Theorem \ref{thm:passiveequivalence}) and that non-dominated strategies are exactly inexcusable passive responsibility-minimising strategies (Theorem \ref{thm:inexcusableequivalence}). The intersection of these two circles is always non-empty (Theorem \ref{thm:alwaysoverlap}).}\label{figure3}
\end{center}
\end{figure}

\section{Conclusion and Future Work}\label{sec:futurework}

In this paper we have introduced a formal setting for the study of multi-value responsibility. We have considered various desirable properties for notions of multi-value responsibility, namely consistency, completeness and the acceptance of weak and strong excuses, and then defined two notions of responsibility (passive and inexcusable passive responsibility) that satisfy some or all of these properties. We introduced a symbolic notion of regret as well as the notion of non-dominance in this setting. Finally we applied the concept of responsibility anticipation to our notions of responsibility and showed that the anticipation of each notion can be alternatively formalised in terms of regret minimisation and non-dominance.

We argue that value-based agents should always seek to minimise both passive and inexcusable passive responsibility since this guarantees both regret minimising and non-dominated strategies. On the other hand, it is not clear that strategies that minimise worst-case regret are always to be preferred, even if they are also best-effort, since there may exist an alternative best-effort strategy that does not minimise regret in the worst case but does a much better job of minimising regret ``overall''. However, given that our model contains no information that $i$ can use to evaluate the relative likelihood of different strategies for $\agentset \setminus \{i\}$, it is difficult to formalise what ``overall'' means. Nonetheless, for some applications we may prefer to require only that agents minimise inexcusable passive responsibility instead of passive responsibility, since selecting non-dominated strategies will always be preferable (since by definition, an agent cannot give an excuse for not choosing a non-dominated strategy).

Overall, we hope that this paper has demonstrated the potential application of responsibility-conscious agents to decision-making in strategic multi-value scenarios. By demonstrating the connections between considerations of responsibility and pre-existing notions such as non-dominated and regret-minimising strategies, we aim to show both how responsibility considerations can lead to better decision-making, but also how good decision-making works to generally decrease the degree of responsibility of an agent.

The application of this work is most natural in settings where agents are working towards different goals but are not directly competing. Our running example of a cleaning robot in an airport is such a setting, as the other agents (the passengers) do not share the goals of the robot, but also have no desire to prevent the robot from achieving its goals. Responsibility-based evaluation is also particularly useful when the satisfaction of some or all of an agents goals or values cannot be guaranteed (due to possible interference from other agents), since responsibility minimisation \textit{can} always be guaranteed.

In future work, we would like to consider more closely the connection between responsibility for risk and responsibility anticipation, since both acknowledge that an agent may need to consider/may be attributed responsibility for some outcome even in scenarios where the outcome in question does not occur, but where the actions of the agent mean that it \textit{might} have occurred. This would be particularly relevant if we introduce explicitly probabilistic reasoning into our model, as this is a typical feature of the formal study of risk.

Since (to the best of our knowledge) the field of research into multi-value responsibility is at present extremely limited, there are a number of potentially fruitful directions for future research. Our model makes a number of simplifying assumptions, in particular assuming that all agents have perfect knowledge of the domain $\pldomain$ but have no knowledge about the relative likelihood of the possible strategies of the other agents. Both of these assumptions could be dropped, such as by giving the agents only partial information about the start state/action theory, or by letting them reason in some way about the values and rationality of other agents. It would also be useful to have a more precise computational grounding as both a first step towards a real-world implementation of these ideas and a useful insight into the computational complexity of this model. This would require some changes to the model, such as a more complex represantation of strategies to make iterating over all possible strategies computationally feasible. It is also worth expanding our definitions of responsibility to consider responsibility for multiple agents (where this paper is focused on a single-agent perspective), such as when some coalition of agents jointly acts to guarantee outcomes that none of them could have guaranteed individually (as others have done for single-value responsibility). Finally, we believe that there is much more work to be done both in studying the formal properties of the notions of multi-value responsibility that we present here, and further exploring the undoubtedly rich landscape of responsibility notions in multi-value settings.

\begin{credits}

\subsubsection{\discintname}
The authors have no competing interests.
\end{credits}
\bibliographystyle{splncs04}
\bibliography{biblio}

\section*{Appendix}

\begin{corollary}
    Given Histories $\history_1, \history_2, \history_3$, and a value base $\valueprof$, let $X = \satset(\history_1,\valueprofplus)$, $Y = \satset(\history_2,\valueprofplus), Z = \satset(\history_3,\valueprofplus)$. Then $X \preceq Y$ if and only if $X\setminus Z \preceq Y \setminus Z$. 
\end{corollary}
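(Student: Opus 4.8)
The plan is to reduce the relation $\preceq$ to a lexicographic comparison of integer vectors, and then to show that forming the set difference $\,\cdot \setminus Z\,$ multiplies those vectors by the fixed positive constant $\tfrac12$, so that the lexicographic order is preserved. For any $S \subseteq \valueprofplus$ and each level $n$ with $1 \le n \le m$, I would write $d_n(S) = |S \cap \valueset_n| - |S \cap \valuesetminus_n|$, so that by the definition of $\preceq$ we have $S \preceq S'$ exactly when the vector $(d_1(S),\dots,d_m(S))$ is lexicographically at or below $(d_1(S'),\dots,d_m(S'))$: clause (i) locates the first strictly smaller coordinate and clause (ii) records equality of all coordinates.

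The crucial structural fact, which I would establish next, is that because $X$, $Y$ and $Z$ are satisfaction sets of histories, each of them contains exactly one of $\omega$ and $\neg\omega$ for every value $\omega \in \bigcup\valueprof$. This follows from the bivalence of the $\LTLf$ semantics at time $0$ together with the consistency assumption on $\valueprof$, which guarantees that $\valueset_n$ and $\valuesetminus_n$ are disjoint and pair each $\omega$ with its negation. Hence every $\omega \in \valueset_n$ contributes $+1$ to $d_n(S)$ when $\omega \in S$ and $-1$ when $\neg\omega \in S$.

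The main computation is then the identity $d_n(S \setminus Z) = \tfrac{1}{2}\bigl(d_n(S) - d_n(Z)\bigr)$ for $S \in \{X,Y\}$, which I would prove value by value. For $\omega \in \valueset_n$, the element $\omega$ survives in $S \setminus Z$ (contributing $+1$ to $d_n$) precisely when $\omega \in S$ and $\neg\omega \in Z$; the element $\neg\omega$ survives (contributing $-1$) precisely when $\neg\omega \in S$ and $\omega \in Z$; and $\omega$ contributes $0$ whenever $S$ and $Z$ agree on it. Writing $s_\omega, z_\omega \in \{+1,-1\}$ for the signs above, the per-value contribution to $d_n(S\setminus Z)$ is exactly $\tfrac12(s_\omega - z_\omega)$, and summing over $\omega \in \valueset_n$ yields the identity.

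Finally I would combine the instances $S = X$ and $S = Y$: subtracting gives $d_n(X\setminus Z) - d_n(Y\setminus Z) = \tfrac12\bigl(d_n(X) - d_n(Y)\bigr)$ for every $n$, so the difference vector of $X\setminus Z$ and $Y\setminus Z$ is the positive scalar multiple $\tfrac12$ of that of $X$ and $Y$. Since the lexicographic comparison of two vectors depends only on the first nonzero coordinate of their difference, and a positive rescaling preserves both the index of that coordinate and its sign, the comparison is unchanged, giving $X \preceq Y \iff X\setminus Z \preceq Y\setminus Z$. The main obstacle is precisely the identity $d_n(S\setminus Z) = \tfrac12(d_n(S) - d_n(Z))$: everything hinges on the bivalence property of history satisfaction sets, and the real content lies in getting the case analysis and the resulting factor of $\tfrac12$ right. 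Once that identity is in hand, the lexicographic conclusion is routine.
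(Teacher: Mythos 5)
Your proposal is correct and follows essentially the same route as the paper's proof: both reduce $\preceq$ to lexicographic comparison of per-level score vectors, establish the key identity that the score of $S \setminus Z$ equals $\tfrac{1}{2}\bigl(\mathit{Score}(S) - \mathit{Score}(Z)\bigr)$ using bivalence of history satisfaction sets, and conclude that positive rescaling preserves the ordering. The only (inessential) difference is bookkeeping: you derive the identity by summing signed per-value contributions, while the paper derives it by cardinality arithmetic on $|X \cap \valueset_{k'}|$, $|Z \cap \valueset_{k'}|$ and their intersection.
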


\begin{proof}
    Let $k$ be the number of priority levels for $\valueprof$ (so $\valueprof = (\valueset_1, \ldots, \valueset_k)$. For all $1 \leq k' \leq k$, let $Score(A,k') = |A \cap \valueset_{k'}| - |A \cap \valueset_{k'}^\neg|$. Then we can see that comparing any two sets $A$ and $B$ by $\preceq$ depends solely on the values of $Score(A,k')$ and $Score(B,k')$ for $1 \leq k' \leq k$.

    Therefore it is enough to prove that for all $1 \leq k' \leq k$, $Score(X,k') \leq Score(Y,k')$ if and only if $Score(X \setminus Z,k') \leq Score(Y \setminus Z,k')$.

    Let $k'$ be such that $1 \leq k' \leq k$, let let $t = |\valueset_k'|$. Let $x = |X \cap \valueset_{k'}|$ and $z = |Z \cap \valueset_{k'}|$. Note that $|X \cap \valueset_{k'}^{\neg}| = t - x$, since all values in $\valueset_{k'}$ are either satisfied or violated in $X$. Therefore $Score(X,k') = 2x-t$ and by a similar argument $Score(Z,k') = 2z-t$.

    Let $c = |\valueset_k' \cap X \cap Z|$. Therefore $|\valueset_{k'} \cap (X \setminus Z)| = x - c$ and $|\valueset_{k'} \cap (Z \setminus X)| = z - c$. Note that $|\valueset_{k'}^\neg \cap (X \setminus Z)| = |\valueset_{k'} \cap (Z \setminus X)|$ since every value satisfied in $Z$ but not in $X$ is a value that is violated in $X$ but not in $Z$. Therefore 
    \begin{align*}
    &Score(X \setminus Z,k')\\
    =&|\valueset_{k'} \cap (X \setminus Z)| - |\valueset_{k'}^\neg \cap (X \setminus Z)|\\
    =&(x-c) - (z-c)\\
    =&x-z\\
    =&\frac{2x-2z}{2}\\
    =&\frac{(2x-t)-(2z-t)}{2}\\
    =&\frac{Score(X,k') - Score(Z,k')}{2}
    \end{align*}

By a similar argument $Score(Y \setminus Z,k') = \frac{Score(Y,k') - Score(Z,k')}{2}$. It follows immediately that $Score(X,k') \leq Score(Y,k')$ if and only if $Score(X \setminus Z,k') \leq Score(Y \setminus Z,k')$ and we are done.
\end{proof}
\end{document}